\title{\LARGE \bf
Verifiable Learned Behaviors via Motion Primitive Composition: \\
Applications to Scooping of Granular Media
}
\author{Andrew Benton, Eugen Solowjow, Prithvi Akella$^{1}$
\thanks{$^{1}$All authors are with Siemens Corporation
        {\tt\small \{andrew.benton, prithvi.akella, eugen.solowjow\}@siemens.com}}%
}
\begin{document}

\maketitle
\thispagestyle{empty}
\pagestyle{empty}

\begin{abstract}
A robotic behavior model that can reliably generate behaviors from natural language inputs in real time would substantially expedite the adoption of industrial robots due to enhanced system flexibility. To facilitate these efforts, we construct a framework in which learned behaviors, created by a natural language abstractor, are verifiable by construction. Leveraging recent advancements in motion primitives and probabilistic verification, we construct a natural-language behavior abstractor that generates behaviors by synthesizing a directed graph over the provided motion primitives. If these component motion primitives are constructed according to the criteria we specify, the resulting behaviors are probabilistically verifiable. We demonstrate this verifiable behavior generation capacity in both simulation on an exploration task and on hardware with a robot scooping granular media.
\end{abstract}

\section{Introduction}
In recent years, learning from human demonstrations has proven tremendously successful at imitating intricate, human-like motion on robotic systems~\cite{atkeson1997robot,ravichandar2020recent,zhu2018robot}.  This has allowed for improvements in robotic grasping~\cite{lin2012learning,pastor2009learning,misimi2018robotic}, assembly~\cite{zhu2018robot,manyar2023inverse,wang2014towards}, and even robotic surgery~\cite{keller2020optical,osa2014trajectory,kim2020autonomously}.  However, these methods often require prohibitive amounts of precisely labeled data~\cite{Hussein2017Imitation}.  Additionally, these learned behaviors are typically not transferrable to tasks that are similar but not identical, prompting further research into task-transferrable learning~\cite{chao2011towards,hausman2017multi,finn2017one}. However, works in this vein exhibit similar, if not heightened, requirements on the amount of data available to the learning procedure.

Despite these challenges, more comprehensive learned models that incorporate streams of multimodal data have shown tremendous success at learning generalized, intricate behaviors.  For example, the recently developed Palm-E model has successfully translated natural language user commands to control policies for a $6$-DOF arm, realizing the intended tasks even when they were not explicitly learned~\cite{driess2023palm}.   Building on the success of Palm-E and other foundational robotic models~\cite{cui2022can,shah2023vint,bommasani2021opportunities}, recent work also aims to codify effective design principles for these models~\cite{vemprala2023chatgpt}.

\begin{figure}[t]
    \centering
    \includegraphics[width = \columnwidth]{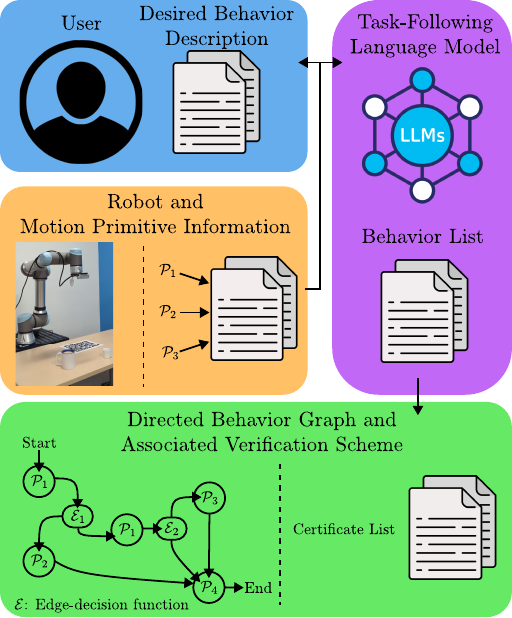}
    \caption{A graphical representation of our natural-language-based behavior generalizer and verification scheme.  By ensuring that the language model only composes behaviors as a directed graphical abstraction over the provided motion primitives, we show that any such generated behavior has an associated certificate list that we can exploit to verify the learned behavior's ability to realize the user's desired task.} \vspace{-0.25 in}
    \label{fig:title}
\end{figure}

Conceptually, however, both the Palm-E model and the other learning paradigms mentioned prior hinge on a notion of composing generalized behavior from a finite set of learned behaviors.  Prior work in controls and robotics has shown that generalizing from this initial behavior set, termed motion primitives in the existing literature, yields robust, and more importantly, verifiable generalized behavior provided the primitives and subsequent behaviors are constructed with care~\cite{stulp2011learning,ubellacker2023robust,akella2023probabilistic}.  Consequently, inspired by the previous attempts at codifying design principles for these learned models~\cite{vemprala2023chatgpt}, we posit that by leveraging these prior works in motion primitives and black-box risk-aware verification, we can synthesize verifiable learned behaviors over a provided set of carefully constructed motion primitives.

\newidea{Our Contribution:}
Leveraging recent work in risk-aware verification~\cite{akella2022sample,akella2022scenario}, we take steps towards constructing a framework for verifying learned, generalized behaviors composed from a set of motion primitives.  Specifically, if the input/output spaces of the motion primitives satisfy certain conditions that permit its verifiability, and the behavior is constructed as a directed graph over these primitives, then the resulting behavior is similarly verifiable.  We showcase this verifiability in both simulation and on hardware, focusing on exploration and reconnaissance for the former and a granular media scooping task for the latter.

\newidea{Structure:} We review black-box risk-aware verification and motion primitives in Section~\ref{sec:definitions} before formally stating the problem under study in Section~\ref{sec:problem_statement}.  Section~\ref{sec:verifying_behaviors} details our behavior generation scheme and states our main contribution regarding the verifiability of the resulting generated behaviors.  Finally, Section~\ref{sec:demonstrations} showcases our behavior generation scheme developing an exploratory behavior - Section~\ref{sec:simulation} - and a scooping motion for granular media - Section~\ref{sec:scooping}.  Both behaviors are also verified in the same sections according to the provided verification scheme.

\section{Terminology and Formal Problem Statement}
\label{sec:definitions}
\subsection{Black-Box Risk-Aware Verification}
\label{sec:verification}
The information in this section is adapted from~\cite{akella2022sample,akella2022scenario}.  Black-box risk-aware verification assumes the existence of a discrete-time controlled system of the following form, with system state $x \in \mathcal{X}$, control input $u \in \mathcal{U}$, environment state $d \in \mathcal{D}$ and potentially unknown dynamics $f$:
\begin{equation}
    \label{eq:system}
    x_{k+1} = f(x_k,u_k,d),~\forall~k=0,1,2,\dots.
\end{equation}
As verification measures the robustness of a controlled system's ability to realize a behavior of interest, work in this vein assumes the existence of a feedback controller $U:\mathcal{X} \times \mathcal{D} \to \mathcal{U}$. The system's evolution when steered by this controller $U$ will be denoted as $\Sigma$ - a function mapping an initial system and environment state to the system state evolution as prescribed by~\eqref{eq:system}, \textit{i.e.} $\Sigma(x_0,d) = \{x_0,x_1,\dots,x_K\} \in \mathcal{X}^K$ for some $K>0$.  Finally, a robustness measure $\rho$ maps this state evolution $\Sigma(x_0,d)$ and environment state $d$ to the reals, \textit{i.e.} $\rho: \mathcal{X}^K \times \mathcal{D} \to \mathbb{R}$.  For context, these robustness measures can be those coming from temporal logic~\cite{donze2010robust} or the minimum value of a control barrier function over a time horizon~\cite{ames2016control} among other methods.  A positive outcome of this robustness measure indicates that the corresponding state evolution realized the desired behavior, \textit{i.e.} $\rho(\Sigma(x_0,d),d) \geq 0$ implies the state evolution $\Sigma(x_0,d)$ realized the behavior of interest.

Black-box risk-aware verification employs this robustness measure to provide a probabilistic statement on the system's ability to realize the desired behavior for all permissible initial conditions and environment states.  This will formally be expressed in the following theorem:
\begin{theorem}
    \label{thm:verification}
    Let $\{r^i = \rho(\Sigma(x_0^i,d^i),d^i)\}_{i=1}^N$ be a set of $N$ robustness evaluations of trajectories whose initial conditions and environments $(x_0^i,d^i)$ were sampled via $\pi$ over $\mathcal{X} \times \mathcal{D}$, and let $r^* = \min \{r_1,r_2,\dots,r_N\}$.  Then, both the probability of sampling an initial condition and environment evolution pair whose robustness is lower bounded by $r^*$ and the confidence in the associated probability is only a function of the number of samples $N$ and a scalar $\epsilon \in [0,1]$, \textit{i.e.}
    \begin{equation}
        \label{eq:prob_verification}
        \prob^N_{\pi}\left[\prob_{\pi}[\rho(\Sigma(x_0,d),d) \geq r^*] \geq 1-\epsilon \right] \geq 1-(1-\epsilon)^N.
    \end{equation}
\end{theorem}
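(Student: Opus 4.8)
The plan is to reduce the two-level probabilistic statement to an elementary one-dimensional order-statistics argument on the scalar robustness value. First I would let $R = \rho(\Sigma(x_0,d),d)$ denote the real-valued random variable induced by drawing $(x_0,d) \sim \pi$, so that the $N$ evaluations $r^1,\dots,r^N$ are i.i.d.\ copies of $R$ and $r^*=\min_i r^i$ is the smallest order statistic. The inner probability in~\eqref{eq:prob_verification} is then exactly the tail function $g(t) := \prob_\pi[R \geq t]$ evaluated at the random threshold $t=r^*$, and the entire claim becomes a statement about the sampling distribution of $g(r^*)$.

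Next I would characterize the failure event $\{g(r^*) < 1-\epsilon\}$ in terms of where the samples land. Writing $g(t) = 1 - \prob_\pi[R < t]$ shows $g$ is non-increasing and left-continuous, so I would define the threshold $a := \sup\{t : g(t) \geq 1-\epsilon\}$ and argue, using left-continuity to place $a$ inside the set, that $g(t) \geq 1-\epsilon$ precisely for $t \leq a$. Consequently $g(r^*) < 1-\epsilon$ holds if and only if $r^* > a$, i.e.\ if and only if every one of the $N$ samples $r^i$ exceeds $a$. This is the crux: it converts an opaque condition on the unknown tail mass below the empirical minimum into the clean event that all samples fall in the region $(a,\infty)$.

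Then I would bound $\prob_\pi[R > a]$. By the supremum definition of $a$ together with continuity of measure along the increasing union $\{R > a\} = \bigcup_{t>a}\{R \geq t\}$, I would obtain $\prob_\pi[R > a] = \lim_{t \downarrow a} g(t) \leq 1-\epsilon$, since $g(t) < 1-\epsilon$ for every $t > a$. Independence of the samples then gives that the probability all $r^i$ exceed $a$ equals $(\prob_\pi[R > a])^N \leq (1-\epsilon)^N$, and combining this with the previous paragraph yields $\prob^N_\pi[g(r^*) < 1-\epsilon] \leq (1-\epsilon)^N$. Taking the complement produces exactly~\eqref{eq:prob_verification}.

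The step I expect to be the main obstacle is the rigorous handling of the case where the distribution of $R$ carries atoms or is otherwise discontinuous, so that $g$ need not attain the value $1-\epsilon$ and naive quantile inversion fails. This is precisely why I would route the argument through the supremum definition of $a$ and the one-sided-limit bound on $\prob_\pi[R > a]$ rather than assuming $g(a) = 1-\epsilon$; aligning the strict-versus-nonstrict inequalities at the boundary $t = a$ is the only genuinely delicate point, while the remainder of the argument is elementary and relies only on the independence of the $N$ draws.
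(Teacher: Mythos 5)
The paper itself gives no proof of Theorem~\ref{thm:verification}; it is imported verbatim from the cited references \cite{akella2022sample,akella2022scenario}, where the argument is exactly the order-statistics/quantile-inversion one you describe. Your proof is correct and complete: reducing to the tail function $g(t)=\prob_\pi[R\geq t]$, characterizing the failure event $\{g(r^*)<1-\epsilon\}$ as $\{r^*>a\}$ via the supremum threshold $a$, and bounding $\prob_\pi[R>a]\leq 1-\epsilon$ by a one-sided limit is precisely the right way to handle atoms, and independence then yields $(1-\epsilon)^N$ for the failure probability. The only implicit ingredient worth flagging is that the samples must be drawn i.i.d.\ (which the product-measure notation $\prob^N_\pi$ presupposes); with that stated, nothing is missing.
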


\subsection{Motion Primitives}
Motion primitives are a well-studied field in the controls and robotics literature, though we will provide a slight variant on existing definitions to align with our notation.
\begin{definition}
    \label{def:motion_primitive}
    A \textit{Motion Primitive} is $4$-tuple $\mathcal{P} = (\Xi,A,U,R)$ with the following definitions for the tuple: 
    \begin{itemize}
        \item[$(\Xi)$] The complete set of parameters for this primitive, \textit{i.e.} $\Xi \subseteq \mathbb{R}^p$ for an appropriate dimension $p\geq0$.
        \item[$(A)$] A function taking a system and environment state $(x,d)$ as per~\eqref{eq:system} and outputting the subset of valid parameters $P$ for this pair, \textit{i.e.} $A(x,d) = P \subseteq \Xi$.
        \item[$(U)$] The parameterized controller for this primitive, mapping states, environments, and the parameter to inputs, \textit{i.e.} $U:\mathcal{X} \times \mathcal{D} \times \Xi \to \mathcal{U}$.
        \item[$(R)$] A parameterized function outputting the subset of the state space the system will occupy upon completion of the primitive, \textit{i.e.} for $\xi \in \Xi$ and with environment state $d$, $R(\xi,d) = X_f \subseteq \mathcal{X}$.
    \end{itemize}
\end{definition}

As an example consistent with the simulations to follow then, consider the system as per~\eqref{eq:system} to be a single-integrator system on the plane required to navigate in a finite-sized grid.  A feasible motion primitive $\mathcal{P}$ would be moving the system to an adjacent cell.  For simplicity's sake, assume there are no obstacles, and as such, the environment state space $\mathcal{D} = \varnothing$.  Then, the complete set of parameters $\Xi$ would be the labels for all the cells in this grid, the accepting function $A$ outputs all adjacent cells to the cell containing the current system state $x$, $U$ could be a proportional controller sending the system to the appropriate grid, and $R$ would output the subset of the state space encompassed by the cell to which the system was required to move.  

\subsection{Problem Statement}
\label{sec:problem_statement}
Our goal is to develop a framework by which behaviors learned over these primitives can be verified.  As such, we define a behavior $B$ as a directed graph of primitives, with edges from a primitive $\mathcal{P}$ indicating the primitive $\mathcal{P}'$ to be run upon completion of $\mathcal{P}$. 
For examples of such behaviors, see the sketch provided in Figure~\ref{fig:title} and the resulting behavior for our simulation example in Figure~\ref{fig:sim_behavior_graph}.  The formal definition of these behaviors will follow.
\begin{definition}
    \label{def:behavior}
    A \textit{behavior} $B$ is a directed graph defined as a $4$-tuple, \textit{i.e.} $B = (N,E,S,T)$ with the following definitions:
    \begin{itemize}[align = parleft, labelsep = 3em, leftmargin = 3 em]
        \item[$(N)$] The finite set of nodes for the graph, where each node is a primitive as per Definition~\ref{def:motion_primitive}, \textit{i.e.} $N = \{\mathcal{P}_1, \mathcal{P}_2,\dots,\mathcal{P}_{|N|}\}$.
        \item[$(E)$] The set of directed edges connecting nodes in the graph.  Each edge identifies a method to choose parameters for the successive primitive.  If multiple edges emanate from a node, then a method exists such that at runtime, only one edge is chosen.
        \item[$(S)$] A start function taking as input the system and environment state $(x,d)$ as per~\eqref{eq:system} and outputting both the starting primitive and its parameter, \textit{i.e.} $S(x,d) = (\xi,\mathcal{P})$ where $\mathcal{P} \in N$ and $\xi \in A_{\mathcal{P}}(x,d)$.
        \item[$(T)$] The set of terminal nodes, \textit{i.e.} $T \subseteq N$.  
    \end{itemize}
\end{definition}

Our goals are twofold.  First, determine whether we can verify the behaviors generated by Algorithm~\ref{alg:behavior}, and second, if the behaviors are verifiable, determine a framework by which we can verify any behavior generated by this method.  Phrased formally, the problem statement will follow.
\begin{problem}
    \label{prob_statement}
    Determine if the behaviors generated by Algorithm~\ref{alg:behavior} are verifiable, and if they are verifiable, determine a method to verify any such generated behavior.
\end{problem}

\begin{algorithm}[t]
\caption{Natural Language-based Behavior Generalizer}\label{alg:behavior}
\begin{algorithmic}
\Require A set of primitives as per Definition~\ref{def:motion_primitive} and their descriptions $\mathbb{D} = \{(\mathcal{P}_i,$ description of primitive $i)\}_{i=1}^M$, a list of existing behaviors $\mathbb{B} = \{B_1,B_2,\dots\}$ with behaviors $B$ as per Definition~\ref{def:behavior}, and a natural language abstractor $A$ taking as input a string $s$ defining a desired behavior, a string $I$ defining any useful, non-primitive information available for behavior generation, and the primitive list $\mathbb{D}$ and outputting behaviors $B$.
\While{True}
\State $c \gets$ desired behavior $B$
\If{$c \not \in \mathbb{B}$}
\State $s \gets$ description of desired behavior
\State $I \gets$ helpful non-primitive information
\State $\mathbb{B} \gets \mathbb{B} \bigcup A(s,I)$
\EndIf
\EndWhile
\end{algorithmic}
\end{algorithm}

\section{Verifying Learned Behaviors}
\label{sec:verifying_behaviors}
We will provide a solution to both aspects of Problem~\ref{prob_statement} simultaneously, by constructing the framework for verifying any behavior as per Definition~\ref{def:behavior}.  To construct this framework, we first note that there exist two outcomes to executing any behavior from any initial system and environment state - it either terminates successfully or it does not.  In the event it terminates successfully, we can record the set of all primitives run over the course of the behavior, their corresponding parameters, and the system states upon termination of the corresponding primitive, \textit{i.e.} $\mathbb{D} = \{(\xi_1,\mathcal{P}_1,x^f_1), (\xi_2,\mathcal{P}_2,x^f_2),\dots\}$.  If the behavior fails due to reasons such as an intermediary controller failure or an error in the behavior's graph construction leading to a runtime error, we can record the failure.  

This permits us to construct a robustness measure for a verification scheme aligned with the method described in Section~\ref{sec:verification}.  First, for each pair in the dataset $\mathbb{D}$ generated by running the behavior, we can define a certificate function checking whether the terminal state laid in the terminal set prescribed by the primitive, parameter, and environment:
\begin{equation}
    \label{eq:certificate}
    C\left(\xi,\mathcal{P},x^f,d\right) = x^f \in R_{\mathcal{P}}(\xi,d).
\end{equation}
Here, we note that we are implicitly associating boolean outcomes with $\pm 1$.  The robustness measure $\rho$ would check the validity of each of these certificates over the run of a behavior and output $1$ if all certificates were satisfied and $-1$ if the system failed or any certificate was not satisfied.  Specifically then, let $(x_0,d)$ be the initial system and environment state, let $\Sigma$ be the trajectory function as described in Section~\ref{sec:verification}, and let $\mathbb{D}$ be the dataset of tuples collected over the course of a successfully run behavior.  Then the robustness measure
\begin{equation}
    \label{eq:behavior_robustness}
    \rho_B(\Sigma(x_0,d),d) = \begin{cases}
        \min\limits_{\gamma \in \mathbb{D}}~C(\gamma,d) & \mbox{if~behavior~finished}, \\
        -1 & \mbox{else}.
    \end{cases}
\end{equation}
Here, we have abbreviated the tuples in $\mathbb{D}$ with the variable $\gamma$ to ease notation.  That being said, the robustness measure $\rho_B$ in~\eqref{eq:behavior_robustness} evaluates to a positive number if and only if the behavior successfully terminated and all component primitives exhibited their component desired behaviors.

\begin{figure*}[t]
    \centering
    \includegraphics[width = \textwidth]{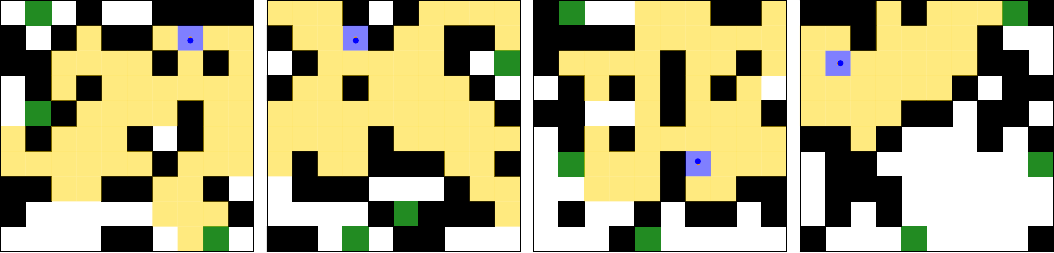}
    \caption{Examples of the environments considered for the example in Section~\ref{sec:simulation}.  The blue circle represents the agent, the blue square represents the agent's starting cell, the green squares are goals, the black squares are obstacles, and the gold region is the region explored by the learned behavior.} \vspace{-0.25 in}
    \label{fig:sim_examples}
\end{figure*}

Using the robustness measure in~\eqref{eq:behavior_robustness}, we can verify any behavior as per Definition~\ref{def:behavior}.  To ease the formal exposition of the results, we will first denote via $\mathcal{B}$ the subset of the system and environment state spaces that have a valid starting point for the behavior $B$ to be verified.  This is to ensure that in the verification procedure to follow, we do not sample and evaluate the behavior's performance from initial conditions and environment states that disallow the behavior from the start.  Formally then,
\begin{equation}
    \label{eq:allowable_space}
    \mathcal{B} = \{(x,d) \in \mathcal{X} \times \mathcal{D}~|~S_B(x,d) \neq \varnothing\}.
\end{equation}
With these definitions we have the following theorem identifying a framework to verify behaviors, though to simplify exposition, we will express the assumptions separately:
\begin{assumption}
    \label{assump:behavior_volume_fraction}
    Let $\{r^i = \rho_B(\Sigma(x^i_0,d^i),d^i)\}_{i=1}^N$ be the behavioral robustness of $N$ attempts at executing behavior $B$ from uniformly sampled initial conditions and states $(x_0,d)$ over the allowable space $\mathcal{B}$ as per~\eqref{eq:allowable_space} with robustness measure $\rho$ as per~\eqref{eq:behavior_robustness}, and let $r^* = \min_i r^i$.
\end{assumption}
\begin{theorem}
    \label{thm:behavior_volume_fraction}
    Let Assumption~\ref{assump:behavior_volume_fraction} hold.  If $r^* = 1$, then $\forall~\epsilon \in [0,1]$, the behavior $B$ will execute successfully for at least $100(1-\epsilon)\%$ of the initial condition and environment pairs in $\mathcal{B}$ and the confidence in this statement is $1-(1-\epsilon)^N$.
\end{theorem}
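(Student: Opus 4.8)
The plan is to recognize Theorem~\ref{thm:behavior_volume_fraction} as a direct specialization of the general risk-aware verification result in Theorem~\ref{thm:verification}, instantiated with the behavioral robustness measure $\rho_B$ from~\eqref{eq:behavior_robustness} and with the sampling distribution $\pi$ taken to be the uniform distribution over the allowable space $\mathcal{B}$ defined in~\eqref{eq:allowable_space}. Under Assumption~\ref{assump:behavior_volume_fraction}, the $N$ evaluations $\{r^i\}$ are exactly the robustness evaluations required as input to Theorem~\ref{thm:verification}, and $r^* = \min_i r^i$ matches the minimum defined there. I would therefore begin by simply invoking Theorem~\ref{thm:verification} to obtain
\[
    \prob^N_{\pi}\left[\prob_{\pi}[\rho_B(\Sigma(x_0,d),d) \geq r^*] \geq 1-\epsilon\right] \geq 1-(1-\epsilon)^N.
\]

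The crux of the argument is then to translate this abstract robustness bound into the claimed statement about the fraction of successful executions, and this hinges on the binary structure of $\rho_B$. By construction in~\eqref{eq:behavior_robustness}, $\rho_B$ takes values only in $\{-1,+1\}$, evaluating to $+1$ precisely when the behavior terminates and every component certificate~\eqref{eq:certificate} is satisfied, and $-1$ otherwise. The key step is to substitute the hypothesis $r^* = 1$: since $\rho_B$ cannot exceed $1$, the event $\{\rho_B \geq r^* = 1\}$ coincides exactly with $\{\rho_B = 1\}$, which is by definition the event that the behavior executes successfully. Hence the inner probability $\prob_{\pi}[\rho_B(\Sigma(x_0,d),d) \geq r^*]$ equals the probability that a uniformly sampled pair $(x_0,d) \in \mathcal{B}$ yields a successful execution.

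Finally, I would invoke the uniformity of $\pi$ over $\mathcal{B}$ to interpret this probability as a volume fraction: the bound $\prob_{\pi}[\text{success}] \geq 1-\epsilon$ is precisely the statement that the behavior succeeds on at least a $(1-\epsilon)$ proportion of $\mathcal{B}$, i.e.\ on at least $100(1-\epsilon)\%$ of the initial condition and environment pairs. The outer $N$-fold probability $1-(1-\epsilon)^N$ then carries over unchanged as the confidence in this claim, completing the argument.

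I anticipate the only real obstacle to be the logical care needed at the substitution $r^* = 1$: one must verify that no robustness value can strictly exceed $1$, so that the inequality $\rho_B \geq 1$ collapses to equality and thus to successful termination, and one must be explicit that the \emph{fraction of pairs} interpretation relies essentially on $\pi$ being uniform rather than an arbitrary sampling measure. Beyond this translation, the result follows immediately from Theorem~\ref{thm:verification} with no additional estimation required.
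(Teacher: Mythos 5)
Your proposal is correct and mirrors the paper's own proof essentially step for step: both invoke Theorem~\ref{thm:verification} with the uniform distribution over $\mathcal{B}$, use the binary $\{-1,+1\}$ structure of $\rho_B$ from~\eqref{eq:behavior_robustness} to equate the event $\{\rho_B \geq 1\}$ with successful execution, and then interpret the resulting uniform probability as a volume fraction of $\mathcal{B}$. The only cosmetic difference is that the paper makes the volume-fraction step explicit by defining the set $\mathbb{V}$ of successful pairs and a ratio $\mathcal{V}(\cdot)$ of integrals, which you state in words instead.
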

\begin{proof}
    As Assumption~\ref{assump:behavior_volume_fraction} satisfies the conditions for Theorem~\ref{thm:verification}, we can employ the same theorem and get the following result $\forall~\epsilon \in [0,1]$ and substituting $r^* = 1$:
    \begin{equation}
    \begin{aligned}
        \mathbb{C}1 & \triangleq \prob_{\uniform[\mathcal{B}]}[\rho_B(\Sigma(x_0,d),d) \geq 1] \geq 1-\epsilon, \\
        \mathbb{C}2 & \triangleq \prob^N_{\uniform[\mathcal{B}]}[\mathbb{C}1] \geq 1-(1-\epsilon)^N.
    \end{aligned}
    \end{equation}
    Here, $\uniform[\mathcal{B}]$ denotes the uniform distribution over $\mathcal{B}$.  We will analyze $\mathbb{C}1$ first.  Note that in order for $\rho_B(\Sigma(x_0,d),d) \geq 1$, all certificate functions over the dataset $\mathbb{D}$ generated by running behavior $B$ must evaluate to $1$ - a consequence of equations~\eqref{eq:behavior_robustness} and~\eqref{eq:certificate}.  As a result,
    \begin{equation}
        \label{eq:valid_robustness}
        \rho_B(\Sigma(x_0,d),d) = 1 \iff
        \begin{gathered}
            \mathrm{The~behavior~executes} \\
            \mathrm{successfully.}
        \end{gathered}
    \end{equation}
    Therefore, we can define a subset of the feasible joint state space, corresponding to initial conditions and environment states where from and in the behavior executes successfully:
    \begin{equation}
        \mathbb{V} = \{(x,d) \in \mathcal{B}~|~\rho(\Sigma(x,d),d) = 1 \}.
    \end{equation}
    Similarly, we can define a volume fraction function over the allowable joint state space:
    \begin{equation}
        \mathcal{V}(Q) = \frac{\int_Q 1 ds}{\int_{\mathcal{B}} 1 ds}.
    \end{equation}
    Finally, since the uniform distribution assigns probabilistic weight to a subset of events equivalent to their volume fraction in the sample space, $\mathbb{C}1$ resolves to the following:
    \begin{equation}
        \mathbb{C}1 \equiv \mathcal{V}(\mathbb{V}) \geq 1-\epsilon.
    \end{equation}
    Substituting this equivalency in $\mathbb{C}2$ completes the proof.
\end{proof}

\subsection{Extending to Non-Deterministic Behaviors}
In the prior sections, we only considered deterministic system evolution and behavior graph resolution.  However, it may be the case that either the system evolves or the behavior graph resolves non-deterministically.  Our proposed verification framework should account for this non-determinism, and this section details how the prior procedure extends to this case.  We will formalize this non-determinism directly in the context of verification.  Specifically, we assume that we have a distribution by which we can draw robustness evaluations of system trajectories, \textit{i.e.}
\begin{equation}
    \rho(\Sigma(x_0,d),d)~\mathrm{is~sampled~from~}\pi_V
\end{equation}
Note that this accounts for both cases where the initial system and environment states are potentially sampled randomly via a distribution $\pi_X$ over the allowable space $\mathcal{B}$ as per~\eqref{eq:allowable_space} and the ensuing trajectories $\Sigma(x_0,d)$ are also randomly sampled from some unknown trajectory-level distribution $\pi_S$, arising from the aforementioned non-deterministic system evolution or behavior graph resolution.

As a result, we can follow the same verification method as in Theorem~\ref{thm:verification}, though we cannot identify trajectories via initial conditions as we did in Assumption~\ref{assump:behavior_volume_fraction}.  The following assumption and corollary expresses this notion formally:
\begin{assumption}
    \label{assump:non_deterministic_verification}
    Let $\rho_B$ be the robustness measure for the behavior $B$ as per equation~\eqref{eq:behavior_robustness}, let $\{r^i = \rho_B(\Sigma(x^i_0,d^i),d^i)\}_{i=1}^N$ be the robustnesses of $N$ trajectories sampled via the (unknown) distribution $\pi_V$, and let $r^* = \min_i r^i$.
\end{assumption}
\begin{corollary}  
    \label{corr:non_deterministic}
    Let Assumption~\ref{assump:non_deterministic_verification} hold.  If $r^* = 1$, then $\forall~\epsilon \in [0,1]$, the non-deterministic system $\Sigma$ successfully executes the behavior $B$ with minimum probability $1-\epsilon$ and confidence $1-(1-\epsilon)^N$, \textit{i.e.}:
    \begin{equation}
        \prob^N_{\pi_V}\left[\prob_{\pi_V}[\rho(\Sigma(x_0,d),d) \geq r^*] \geq 1-\epsilon \right] \geq 1-(1-\epsilon)^N.
    \end{equation}
\end{corollary}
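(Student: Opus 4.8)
The plan is to derive Corollary~\ref{corr:non_deterministic} as an essentially immediate consequence of Theorem~\ref{thm:verification}, by observing that the probabilistic guarantee of that theorem is insensitive to the source of the randomness in the sampled robustness values. First I would note that the inequality~\eqref{eq:prob_verification} is ultimately an order-statistics statement: given $N$ independent robustness samples drawn from a common distribution together with the scalar $\epsilon$, the quantity $(1-\epsilon)^N$ upper bounds the probability that the sampling distribution places more than $\epsilon$ of its mass strictly below the minimum sample $r^*$. Nothing in this combinatorial argument requires the samples to have arisen from a deterministic trajectory map; it requires only that the $r^i$ be i.i.d. draws from a fixed distribution over the reals.

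Accordingly, the key conceptual step is to reconcile the stochastic setting with the hypotheses of Theorem~\ref{thm:verification}, which were phrased in terms of sampling initial conditions and environments $(x_0^i,d^i)$ via $\pi$. In the deterministic case underlying Theorem~\ref{thm:behavior_volume_fraction}, the trajectory $\Sigma(x_0,d)$ and hence the robustness $\rho_B$ were deterministic functions of $(x_0,d)$, so the induced law on robustness values was simply the pushforward of $\uniform[\mathcal{B}]$ through this map. When the system evolution or the behavior graph resolution is non-deterministic, $\rho_B$ is no longer determined by $(x_0,d)$ alone. I would resolve this by augmenting the sample space: let $\omega$ denote the realization of the trajectory-level randomness $\pi_S$, so that $\rho_B(\Sigma(x_0,d),d)$ becomes a deterministic function of the augmented triple $(x_0,d,\omega)$, and $\pi_V$ is the distribution on the reals induced by pushing the joint law of $(x_0,d,\omega)$ through this function. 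Under Assumption~\ref{assump:non_deterministic_verification}, the $r^i$ are then precisely i.i.d. samples from this $\pi_V$.

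With this identification in hand, Theorem~\ref{thm:verification} applies verbatim with $\pi$ replaced by $\pi_V$, yielding the analogue of~\eqref{eq:prob_verification} for $\pi_V$; substituting $r^* = 1$ then gives the claimed bound directly. The final step is interpretive: I would invoke the equivalence established around~\eqref{eq:valid_robustness}, namely that $\rho_B \geq 1$ holds if and only if the behavior terminates successfully with every component certificate satisfied, so that the inner event $\prob_{\pi_V}[\rho(\Sigma(x_0,d),d) \geq 1] \geq 1-\epsilon$ is exactly the assertion that the non-deterministic system executes $B$ successfully with probability at least $1-\epsilon$, at confidence $1-(1-\epsilon)^N$.

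I expect the only genuine obstacle to be the augmented-sample-space justification in the second paragraph, i.e. confirming that the independence and common-distribution hypotheses of Theorem~\ref{thm:verification} survive once $\Sigma$ is stochastic. Everything else is a direct substitution, since Theorem~\ref{thm:verification} was already proved at the level of abstract robustness samples rather than at the level of trajectories, and therefore never exploited determinism of the map from $(x_0,d)$ to $\rho_B$.
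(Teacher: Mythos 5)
Your proposal is correct and follows the same route as the paper, whose entire proof of Corollary~\ref{corr:non_deterministic} is the single sentence that it is a direct consequence of Theorem~\ref{thm:verification}. Your augmented-sample-space argument simply makes explicit the i.i.d.\ justification that the paper leaves implicit, so there is nothing to reconcile.
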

\begin{proof}
    This is a direct result of Theorem~\ref{thm:verification}.
\end{proof}


\section{Demonstrations}
\label{sec:demonstrations}
\subsection{Exploratory Behavior Generation}
\label{sec:simulation}
To illustrate the verifiability of behaviors generated via Algorithm~\ref{alg:behavior}, this section will detail our efforts at using a natural language abstractor built on ChatGPT to construct an exploratory behavior.

\spacing
\newidea{System and Environment Description:} To that end, the simulations to follow feature an agent idealized as a single integrator system on the plane and navigating within a $10 \times 10$ grid with obstacles and a few goals. The system state $x$ is its planar position and its labels for each of the cells, \textit{i.e.} $x \in [-5,5]^2 \times \{$empty, obstacle, unexplored, goal$\}^{100} \triangleq \mathcal{X}$.  The environment, \textit{i.e.} obstacle and goal cells, is the subset of the overall label space where there exist $30$ obstacles and $3$ goals with no overlaps, \textit{i.e.} $\mathcal{D} \subset$ $\{$empty, obstacle, goal$\}^{100}$.  The system dynamics as per~\eqref{eq:system} are known in this case, with single-integrator dynamics for the planar dimension and label updates when specifically provided by a controller - otherwise, labels remain constant.

\spacing
\newidea{Motion Primitives:} The system has two primitives upon which the natural-language behavior generalizer can build behaviors.  Their descriptions will follow:
\begin{itemize}
    \item[$\mathcal{P}^s_1:$] A label update function that updates the labels in the state $x$ to match the labels of the cells immediately surrounding the agent, \textit{i.e.} if the agent were in cell $(2,3)$ the function updates the labels of cells $\{(2,3),(3,3),(1,3),(2,4),(2,2)\}$.
    \begin{itemize}
        \item[$\Xi:$] The set of all cells, \textit{i.e.} $\Xi = \{0,1,2,\dots,9\}^2$.
        \item[$A:$] A function outputting the cell the system currently occupies, \textit{i.e.} if the system's planar position were $[-4.5,-3.5]$, the only valid parameter is cell $(0,1)$.
        \item[$U:$] Updates the state to reflect the environment labels of all adjacent cells.
        \item[$R:$] A function outputting the portion of the state space where the labels for the agent's current and adjacent cells align with those of the environment.  All other cell labels are unconstrained, \textit{i.e.} if the agent's current and adjacent cells were all empty, then $R(\xi,d)$ would output the subset of the state space containing label vectors whose elements for those cells all read ``empty" with no constraints on other elements. 
    \end{itemize}
    \item[$\mathcal{P}^s_2:$] A navigation function that steers the agent to a desired cell while avoiding obstacles.
    \begin{itemize}
        \item[$\Xi:$] The set of all cells, \textit{i.e.} $\Xi = \{0,1,2,\dots,9\}^2$.
        \item[$A:$] A function outputting the portion of the parameter space where the cell is reachable by the agent in the provided environment.
        \item[$U:$] A Markov-Decision-based planner tracked by a PD controller that steers the agent to the desired cell while avoiding obstacles.
        \item[$R:$] Outputs the portion of the planar state space encompassed by the desired cell, \textit{i.e.} if the agent could reach cell $(2,2)$, then $R(\xi = (2,2),d) = [-2,-1]^2$.
    \end{itemize}
\end{itemize}

\spacing
\newidea{Algorithm Information:} We desired an exploratory behavior whereby the system searches the grid for a goal and after identifying a goal, oscillates back and forth between the goal and its starting location at least $5$ times.  As useful information for the task-following algorithm, the inputted information - string $I$ in Algorithm~\ref{alg:behavior} - indicated that the language model could use the following functions when determining edges in the outputted behavior graph:
\begin{itemize}
    \item[$\mathcal{E}^s_1:$] A function that outputs as a list, all the cells that have been explored by the agent thus far, \textit{i.e.} all cells that have a label other than ``unexplored" in the current state.
    \item[$\mathcal{E}^s_2:$] A function that outputs as a list all cells immediately adjacent to the agent's currently occupied cell.
    \item[$\mathcal{E}^s_3:$] A function that determines whether a goal has been found and outputs the corresponding cell.
\end{itemize}

\begin{figure}[t]
    \centering
    \includegraphics{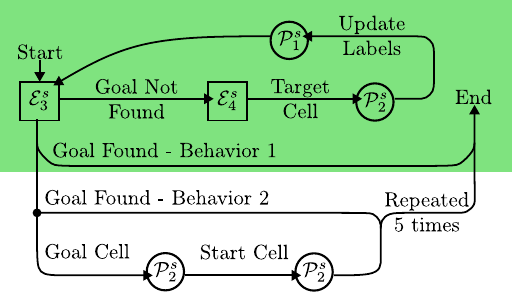}
    \caption{Depiction of the directed behavior graph generated by Algorithm~\ref{alg:behavior} for the example detailed in Section~\ref{sec:simulation}.  The first behavior's graph is highlighted in green, the second behavior incorporates the first and the extra information is the unhighlighted part of the graph.}  \vspace{-0.25 in}
    \label{fig:sim_behavior_graph}
\end{figure}

\begin{figure*}[t]
    \centering
    \includegraphics[width = \textwidth]{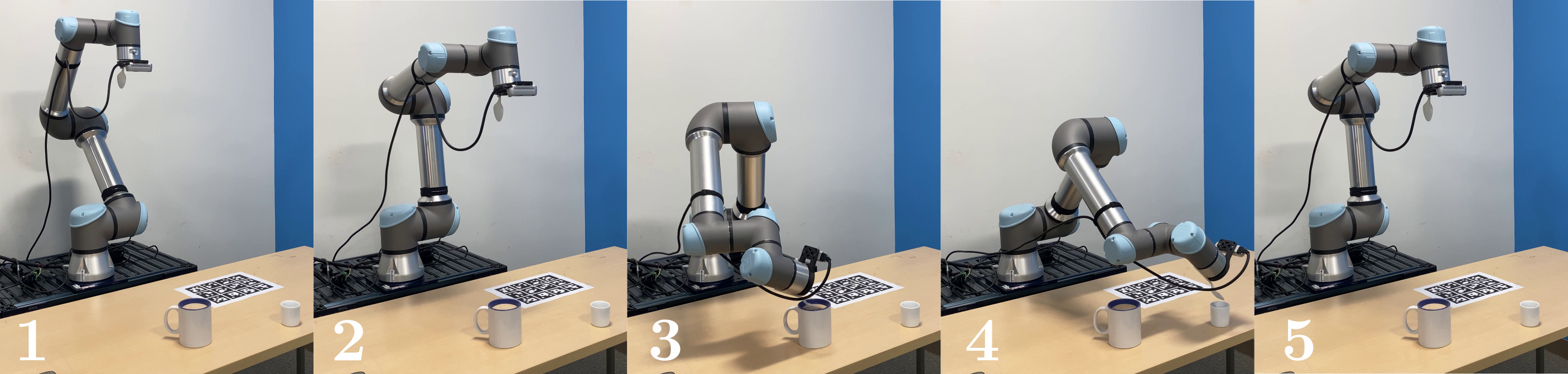}
    \caption{Depiction of the learned scooping behavior.  In this case, the motion was coded previously, but contingent on the arm's ability to sense the cups in its environment.  As such, the LLM interface only asked for the end-user to provide that initial positioning (1) wherein the arm had a high likelihood of sending both cups. Then the LLM behavior first moves to the desired sensing position (2), calls the scooping primitive as seen in (3)-(4), and returns to the instructed sensing position in (5) in case any of the cups shifted during the procedure.  Then the process repeats.} \vspace{-0.25 in}
    \label{fig:arm_tiles}
\end{figure*}

\spacing
\textit{Behavior 1:} For the first step, we asked the algorithm to devise a behavior that explored the grid until it identified a goal.  Specifically, the inputted behavior string $s$ was as follows: \textit{``Please construct a function that performs the following tasks in sequence.  First, it searches over all explored cells that are not obstacles to find the explored cell with the highest number of unexplored neighbors.  Let's call this identified cell, cell A.  Second, it sends the agent to cell A and identifies the labels of all adjacent cells.  Three, it repeats steps one and two until a goal has been found, at which point, it stops."}  The part of the graph highlighted in green in Figure~\ref{fig:sim_behavior_graph} shows the generated behavior graph.  As part of this generation procedure, it used two of the provided functions $\mathcal{E}^s_1,\mathcal{E}^s_2$ to construct the edge decision function $\mathcal{E}^s_4$ whose description will follow:
\begin{itemize}
    \item[$\mathcal{E}^s_4:$] A function that searches over all explored cells - the list of explored cells is provided by $\mathcal{E}^s_1$ - and assigns to each cell the number of its adjacent cells that are unexplored - the list of adjacent cells is provided by $\mathcal{E}^s_2$.  Reports the first cell in the list with the maximum number of unexplored neighbors.
\end{itemize}

\spacing
\textit{Behavior 2:} We wanted to build on the prior behavior for the latter half of our goal, and as such, informed the LLM of the existence of this prior behavior in the list of existing behaviors denoted as $\mathbb{B}$ in Algorithm~\ref{alg:behavior}.  Then, as the user, we requested the following from the LLM: \textit{``Please construct a function that performs the following tasks in sequence.  First, it finds a goal.  Second, it moves between the goal and its starting location 5 times."} 
The behavior graph for this second behavior is the unhighlighted graph in Figure~\ref{fig:sim_behavior_graph}.

\spacing
\newidea{Verification Procedure and Remarks:}  As Behavior $2$ utilized Behavior $1$, verifying both amounts to verifying the former.  Following the results of Theorem~\ref{thm:behavior_volume_fraction}, we recorded a data set $\mathbb{D}$ of parameters, primitives, and terminal states while running the second behavior.  The certificates per equation~\eqref{eq:certificate} amount to checking that updated labels matched their true labels after running primitive $\mathcal{P}^s_1$ and checking that the system occupied the desired cell after running primitive $\mathcal{P}^s_2$.  The allowable joint state space $\mathcal{B}$ as per~\eqref{eq:allowable_space} was the portion of the joint space where the system starts in a state $x$ such that at least one goal is reachable in the corresponding environment $d$.  Finally, the verification procedure uniformly randomly sampled state pairs $(x,d) \in \mathcal{B}$ and checked the corresponding certificates for each run of the behavior.

After running the second behavior from $100$ randomly sampled initial state pairs, the behavior terminated successfully every time.  As such, by Theorem~\ref{thm:behavior_volume_fraction} we expect that the second behavior will run successfully for $95\%$ of possible state pairs and we are $99.4\%$ confident in this statement - we generated these numbers by substituting $\epsilon = 0.05$ and $N=100$ for Theorem~\ref{thm:behavior_volume_fraction}.  To validate these statements, we ran the second behavior in $2000$ more sampled environments, and it terminated successfully every time.  If we were incorrect in our prior statement that the behavior would run successfully for at least $95\%$ of feasible state pairs $(x,d) \in \mathcal{B}$, then we would have been effectively guaranteed to identify such a failure over the $2000$ subsequent runs.  As we did not, we are confident in our corresponding statement.  Furthermore, while the synthesized behaviors seem rudimentary, they suffice to indicate that our behavior synthesis scheme produces effective and verifiable behaviors.

\subsection{Scooping of Granular Media}
\label{sec:scooping}
Our second demonstration focusing on granular media scooping illustrates the framework's utility in helping end-users set up repetitive, verifiable tasks.

\spacing
\newidea{System and Environment Description:} The scooping problem consists of picking up material from a donor container and depositing it into a receiver container using a UR5e $6$-DOF robot arm with a wrist-mounted RealSense depth camera.  While a rudimentary scooping motion has been programmed \textit{apriori}, it does not know the environment in which it will be performing this motion - similar to the situation when a pre-programmed robot has to be initialized for specific use.  The robot's state $x \in \mathbb{R}^6$ is the full pose of the end-effector, the control input $u$ corresponds to joint rotations, and the environment $d$ corresponds to the locations and orientations of the donor and receiver containers and the level and distribution of sand in the donor container.

\spacing
\newidea{Motion Primitives:} In this case, the system only has one primitive, the scooping primitive, described as follows:
\begin{itemize}
    \item[$\mathcal{P}^r:$] A primitive performing a scooping motion from a donor container to a receiver container.
        \begin{itemize}
            \item[$\Xi:$] The space of feasible end-effector poses where a parameter $\xi \in \Xi$ denotes the pose in which the robot will sense all objects in the environment to start the scooping motion.
            \item[$A:$] A function outputting the space of end-effector poses from which all containers are in view of the onboard vision system.
            \item[$U:$] A controller that performs the scooping motion.
            \item[$R:$] A function that outputs a ball around the provided parameter within which the end-effector's pose will lie upon the termination of the scooping motion.
        \end{itemize}
\end{itemize}
That being said, the acceptance function $A$ is implicitly defined and impossible to know \textit{apriori}.  Here, we intend for the algorithm to assist the end-user in selecting a parameter $\xi$ whose validity, \textit{i.e.} existence in $A(x,d)~\forall~(x,d) \in \mathcal{X} \times \mathcal{D}$, can be checked through the ensuing verification procedure.

\spacing
\newidea{Algorithm Information:}  To assist the user in picking such a parameter $\xi$, the algorithm was provided an information string $I$ describing a helper function $\mathcal{E}^r_1$ that translated and rotated the end-effector a desired amount.  This string also included several examples of natural language translations to inputs for this function $\mathcal{E}^r_1$.  Additionally, the string included another function $\mathcal{E}^r_2$ that saved the end-effector pose for future reference, and the LLM was told to call this function if the user deemed the current end-effector pose satisfactory.

\spacing
\newidea{Behavior Generation and Verification:} The task-model repeatedly queried the user for end-effector translations and rotations and as to whether or not the user deemed the current pose sufficient for sensing any placement of containers.  As such, there was no singular behavior prompt $s$.  However, as the resulting behavior repetitively executes the scooping primitive with the user-provided sensing pose parameter $\xi$, this behavior can be verified by the results of Corollary~\ref{corr:non_deterministic}.  To do so, before every scooping motion, we placed the containers at a computer-generated randomly chosen distance from a pre-determined set-point.  As we are manually placing containers at the pre-determined locations, there will be noise affecting this placement, though we assume this noise is independent for successive placements.  We will denote this distribution of container placements via $\pi$.  As there is no need to sample over initial robot states - the system always starts and ends at the parameterized sensing pose $\xi$ every iteration - we can draw independent environments - container placements - via our distribution $\pi$ and record the robot's ability to perform its scooping motion in each placement.  Doing so for $59$ sampled environments with successful trials each time indicates according to Corollary~\ref{corr:non_deterministic} that if we continued to sample environments and test the system accordingly, the system would succeed at least $95\%$ of the time and we are at least $95\%$ confident in that statement.

\section{Conclusion}
We propose a framework by which a natural language abstractor can synthesize verifiable behaviors as a directed graph over provided motion primitives.  To showcase the increased flexibility and verifiability of the synthesized behaviors, we instructed the task-following model to construct an exploratory behavior for a simulated planar agent and a scooping behavior for a robotic arm.  In both cases, the generated behavior was verifiable via the aforementioned method, and we were able to validate our probabilistic verification statements in simulation.

\bibliographystyle{IEEEtran}
\balance
\bibliography{IEEEabrv,bib_works}

\end{document}